\documentclass[11pt]{article}
\usepackage[margin=1in]{geometry}

\usepackage{amsmath,amssymb,amsfonts,amsthm}
\usepackage{booktabs}
\usepackage{enumitem}
\usepackage{hyperref}
\usepackage{multirow}
\usepackage[numbers,sort&compress]{natbib}
\usepackage{tikz-cd}
\usepackage{multicol}
\usepackage[ruled,lined,noend]{algorithm2e}

\newcommand{\R}{\mathbb{R}}
\newcommand{\x}{\mathbf{x}}
\newcommand{\h}{\mathbf{h}}

\newcommand{\ii}{\mathrm{i}}

\newtheorem{theorem}{Theorem}
\newtheorem{corollary}[theorem]{Corollary}
\newtheorem{remark}[theorem]{Remark}

\title{Nautile-370M: Spectral Memory Meets Attention in a Small Reasoning Model}
\author{Trickstr \\ \small Maixent Chenebaux}
\date{\today}

\begin{document}
\maketitle

\begin{abstract}
We present \textbf{Nautile-370M}, a 371-million-parameter small language model designed for efficient reasoning under strict parameter and inference budgets. Nautile-370M uses a hybrid backbone in which two \emph{SeqCond Attention} (SCA) layers---a linear-time spectral sequence operator inspired by SeqCondenser~\cite{chenebaux2024seqcondenser}---alternate with one transformer layer. This design aims to retain the long-context efficiency and state-tracking benefits of structured sequential models while preserving the expressive token-to-token routing of attention. The model was trained on a single Cloud TPU v4-64 pod slice provided through the Google TPU Research Cloud (TRC) program; the subsequent reinforcement learning stage was carried out on a single NVIDIA DGX Spark. We prove that the SCA readout mechanism can exactly retrieve any individual token from the prefix summary and can reproduce any output of softmax attention as a special case, establishing that SCA is at least as expressive as full self-attention in the continuous limit. We also describe the training data pipeline and outline a reinforcement learning stage specialized for reasoning, verification, and response quality.
\end{abstract}

\section{Introduction}

This paper describes \textbf{Nautile-370M}, a 371-million-parameter reasoning-oriented language model whose backbone alternates a novel sequence operator, \emph{SeqCond Attention} (SCA), with standard transformer layers. SCA computes a compressed summary of the prefix by evaluating the gradient of the empirical characteristic function at a small set of learned spectral points, and reads it out via a complex inner product. This gives it the efficiency of a linear recurrence ($O(1)$ state update at inference, parallel scan during training) while grounding the mechanism in a well-studied mathematical object.

The model was pretrained on a single TPU v4-64 pod slice (Google TRC) and post-trained with reinforcement learning on a single NVIDIA DGX Spark. During the RL stage we encountered a failure mode of standard GRPO on small models: when the policy's success rate is low, the negative-advantage gradient mass dominates and reasoning quality degrades. We propose two mitigations---a gradient-balanced GRPO variant and a scored self-distillation stage---that together bring GSM8K accuracy from 28.0\% to 33.4\%.

\paragraph{Contributions.}
\begin{itemize}[leftmargin=1.5em]
    \item \textbf{SCA (SeqCond Attention).} A sequence operator derived from the characteristic function of the prefix distribution. We provide the theoretical motivation and describe how it reduces to a trainable linear recurrence with complex-valued state (Section~2).
    \item \textbf{Theoretical expressiveness.} We prove that the SCA readout can extract any individual token from the prefix summary (Theorem~1), recover the full weighted distribution (Corollary~3), and reproduce any output of a softmax attention layer as a special case (Corollary~4). SCA is therefore at least as expressive as full self-attention in the continuous limit.
    \item \textbf{Hybrid architecture.} A 24-layer backbone interleaving 16 SCA layers and 8 transformer layers, totaling 371M parameters (Section~2).
    \item \textbf{Training pipeline.} A data curriculum combining 350B tokens of FineWeb-Edu with 250B tokens from SYNTH~\cite{pleias2024synth} and additional SYNTH-style synthetic chain-of-thought and instruction data distilled from multiple teacher models (Section~3).
    \item \textbf{Gradient-balanced GRPO.} A modification of standard GRPO that rescales the negative-advantage gradient to prevent it from dominating the positive one, enabling stable RL training at low success rates (Section~4).
    \item \textbf{Scored self-distillation.} An on-policy self-distillation stage that fine-tunes the model on its own verified correct traces, yielding an unexpected but noticeable boost in reasoning accuracy (Section~4).
\end{itemize}

\section{Model Architecture}

\subsection{Overview}

Nautile-370M is a decoder-only autoregressive language model with approximately 371 million parameters. Its backbone is organized into 24 layers arranged in repeated blocks of the form
\[
\text{SCA} \rightarrow \text{SCA} \rightarrow \text{Transformer},
\]
yielding 16 SCA layers and 8 transformer layers in total. The key architectural hyperparameters are summarized below.

\begin{center}
\begin{tabular}{ll}
\toprule
\textbf{Parameter} & \textbf{Value} \\
\midrule
Model dimension $D$ & 1024 \\
Feed-forward dimension $D_{\mathrm{ff}}$ & 2730 \\
Total layers & 24 (16 SCA + 8 Transformer) \\
Context length & 1024 \\
Tokenizer & \texttt{cl100k\_base} \\
Weight tying & Yes \\
\midrule
\multicolumn{2}{l}{\emph{SCA layers}} \\
Memory heads $K$ / Query heads $K'$ & 16 / 16 (no GQA) \\
Head dimension $H$ & 64 \\
Spectral samples $M$ & 2 \\
Expand factor & $2\times$ ($d_{\mathrm{inner}} = 2048$) \\
SwiGLU expansion & $3\times$ \\
Depthwise conv kernel & 4 \\
\midrule
\multicolumn{2}{l}{\emph{Transformer layers}} \\
Attention heads & 16 \\
KV heads (GQA) & 4 \\
Head dimension & 64 \\
\bottomrule
\end{tabular}
\end{center}

Each layer is wrapped with pre-norm RMS normalization~\cite{zhang2019rmsnorm} and residual connections. The SCA/SCA/Transformer ratio is motivated by the observation that much of language modeling consists of incremental state updates, while only a subset of tokens require global competitive selection. The transformer component is intentionally standard: a classical pre-norm causal transformer block~\cite{vaswani2017attention} with rotary position embeddings~\cite{su2021roformer} and grouped-query attention~\cite{ainslie2023gqa}, inserted periodically inside the hybrid stack.

\subsection{SeqCond Attention (SCA) Layer}

SCA is inspired by SeqCondenser~\cite{chenebaux2024seqcondenser}, a layer for inductive sequence representation, and adapts its characteristic-function mechanism to the autoregressive generation setting. It is derived from a principled theoretical object---the derivative of a characteristic function---and then systematically discretized into a practical neural layer. We present the full derivation before describing the implementation.

\subsubsection{Theoretical Foundation: Characteristic Prefix Summary}

Let $(\h_1,\dots,\h_t)$ be the sequence of embeddings observed up to step~$t$. We model this prefix as the support of a discrete random variable $X$ in the embedding space and summarize it through the \emph{characteristic function}
\[
\varphi_X(\theta) \;=\; \mathbb{E}\big[e^{\ii \langle \theta, X \rangle}\big], \qquad \theta \in \R^d.
\]
By the injectivity theorem, $\varphi_X$ determines the distribution of~$X$ uniquely. All moments, when they exist, are recoverable:
\[
\mathbb{E}[X^{\otimes n}] \;=\; (-\ii)^n \,\nabla_\theta^n \varphi_X(\theta)\big|_{\theta=0}.
\]
The characteristic representation is therefore a \emph{sufficient statistic} for the sequence-induced distribution: no information is lost.

SCA operates not on $\varphi_X$ itself but on its \emph{gradient} with respect to the spectral variable:
\[
\nabla_\theta \varphi_X(\theta) \;=\; \ii\,\mathbb{E}\big[X\, e^{\ii \langle \theta, X \rangle}\big].
\]
The signal $X$ now appears \emph{multiplicatively} inside the expectation: the derivative summary jointly encodes the spectral phase \emph{and} the values of the underlying random variable. It is therefore strictly more informative for downstream readout than $\varphi_X$ alone, which only carries phase. This gradient object is the core representation maintained by SCA.

\subsubsection{Spectral Readout via Hermitian Inner Product}

Given the derivative summary $S_{1:t}(\theta) = \nabla_\theta \varphi_{X_{1:t}}(\theta)$, the layer extracts a token-conditioned output by computing a \emph{Hermitian inner product} in the function space $L^2(\R^d, \mathbb{C})$. A \emph{spectral query} $w_t(\theta)$ is derived from the current token, and the readout is defined as
\[
o_t \;=\; \langle S_{1:t},\, w_t \rangle_{L^2} \;=\; \int S_{1:t}(\theta)\,\overline{w_t(\theta)}\, d\theta.
\]
This Hermitian inner product plays the role of an attention mechanism: the current token emits a query and the prefix summary returns a value. The crucial difference is that the memory being queried is a spectral condensation of the entire prefix, not a stored list of past key-value vectors. The conjugation $\overline{w_t}$ ensures sesquilinearity, which is the natural pairing for complex-valued functions and preserves phase-sensitive retrieval.

The readout is not merely a heuristic analogy with attention: it is provably capable of exact individual token retrieval.

\begin{theorem}[Exact token retrieval]
Let $\h_1, \ldots, \h_t \in \R^d$ be pairwise distinct embeddings. For every index $j \in \{1,\ldots,t\}$, the spectral query
\[
w_j(\theta) \;=\; \frac{\ii\, t}{(2\pi)^d}\, e^{\ii \langle \theta,\, \h_j \rangle}
\]
satisfies
\[
\int_{\R^d} S_{1:t}(\theta)\;\overline{w_j(\theta)}\; d\theta \;=\; \h_j,
\]
where the integral is interpreted in the distributional (Fourier-inversion) sense.
\end{theorem}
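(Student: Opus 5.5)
The plan is to compute the integrand explicitly and then apply the Fourier-inversion identity $\int_{\R^d} e^{\ii\langle \theta, a\rangle}\,d\theta = (2\pi)^d\,\delta(a)$, which is the distributional sense named in the statement. Modelling the prefix, as in the construction of the characteristic prefix summary, by the empirical (uniform) law of $X$ on $\{\h_1,\dots,\h_t\}$ gives
\[
S_{1:t}(\theta)=\nabla_\theta\varphi_X(\theta)=\frac{\ii}{t}\sum_{k=1}^t \h_k\, e^{\ii\langle\theta,\h_k\rangle}.
\]
Conjugating the query gives $\overline{w_j(\theta)} = -\ii\, t\,(2\pi)^{-d} e^{-\ii\langle\theta,\h_j\rangle}$. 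The prefactors then cancel exactly, since $\ii\cdot(-\ii)=1$ and $t/t=1$. The integrand is therefore
\[
\frac{1}{(2\pi)^d}\sum_{k=1}^t \h_k\, e^{\ii\langle\theta,\h_k-\h_j\rangle}.
\]
This step is routine. It explains the choice of the constant $\ii t/(2\pi)^d$ in $w_j$: the factor $\ii$ absorbs the $\ii$ produced by differentiating $\varphi_X$, the factor $t$ undoes the empirical averaging, and $(2\pi)^{-d}$ is the Fourier-inversion normalisation.

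Next I would integrate term by term, which is justified in $\mathcal{S}'$ because the sum is finite. Each term gives $\h_k\,\delta(\h_k-\h_j)$. To make this a statement about distributions rather than a formal expression, I would let the query location vary. Replace $\h_j$ by a free point $u\in\R^d$ and set $w_u(\theta)=\ii t(2\pi)^{-d}e^{\ii\langle\theta,u\rangle}$. The readout $u\mapsto\int S_{1:t}\overline{w_u}\,d\theta$ is then the vector-valued distribution $\sum_k \h_k\,\delta_{\h_k}(u)$. Concretely, pairing with any test function $\psi(u)$ and applying Fubini together with Fourier inversion $\int\!\!\int e^{\ii\langle\theta,\h_k-u\rangle}\psi(u)\,du\,d\theta=(2\pi)^d\psi(\h_k)$ yields $\sum_k \h_k\,\psi(\h_k)$.

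The key use of the hypothesis comes last. Because the $\h_k$ are pairwise distinct, this measure is a sum of $t$ atoms at distinct points. Localising at $u=\h_j$, for example with test functions supported in a ball around $\h_j$ that excludes every $\h_k$ with $k\neq j$ and equals $1$ at $\h_j$, isolates the single atom. Its coefficient is exactly $\h_j$; all cross terms $k\neq j$ vanish in this localisation. If two embeddings coincided, their atoms would merge and the localisation would return their sum, so distinctness is precisely what gives exact retrieval of $\h_j$.

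The main obstacle is making precise what ``equals $\h_j$'' means for $\int S_{1:t}\,\overline{w_j}\,d\theta$. Read literally at the fixed point $u=\h_j$, the diagonal term is $\h_j\,\delta(0)$, which is not a finite number. I would therefore state the conclusion as the identification of the atom, with weight $\h_j$ at the location $\h_j$, in the distributional readout $u\mapsto\langle S_{1:t},w_u\rangle$. I would present the theorem's equation as shorthand for this identity and make that reading explicit rather than silently treating $\delta$ as an indicator.
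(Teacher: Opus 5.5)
Your computation matches the paper's up to the identity $\int_{\R^d} e^{\ii\langle\theta,\xi\rangle}\,d\theta=(2\pi)^d\delta(\xi)$: the same expansion of $S_{1:t}$ and the same cancellation of $\ii\cdot(-\ii)$ and $t\cdot t^{-1}$. The two arguments part ways at the last equality.

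The paper writes $\sum_k \h_k\,\delta(\h_k-\h_j)=\h_j$. That is, it evaluates the Dirac delta pointwise as if it were a Kronecker delta with $\delta(0)=1$. Your objection that the diagonal term is $\h_j\,\delta(0)$ is correct, and it means the paper's step does not hold as written. Under a regulator $e^{-\epsilon|\theta|^2}$, the $k=j$ term equals $(4\pi\epsilon)^{-d/2}\h_j\to\infty$ while the $k\neq j$ terms vanish. So the readout against the plane wave $w_j$ has no finite value.

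Your route is rigorous and gives the coherent reading of ``distributional sense'': let the query location $u$ vary, identify $u\mapsto\int S_{1:t}\overline{w_u}\,d\theta$ with the vector measure $\sum_k \h_k\,\delta_{\h_k}$, and use distinctness to isolate the atom at $\h_j$.

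You can state the result more sharply. For real $\psi$, your localised pairing equals $\int S_{1:t}\overline{W_\psi}\,d\theta$ with the Schwartz query $W_\psi(\theta)=\int\psi(u)\,w_u(\theta)\,du$, and this integral converges absolutely. So what you actually prove is exact retrieval of $\h_j$ by a localised wave-packet query rather than by $w_j$ itself, and no ``shorthand'' convention is needed.

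One small correction: the interchange you call Fubini is not literally Fubini, because the double integral is not absolutely convergent. Your displayed order, with $du$ inside, is the correct definition. Equivalently, apply Fubini under the Gaussian regulator and then let $\epsilon\to0$.

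If one wants to keep a plane-wave query and a literal finite equality, there is another repair. Replace $(2\pi)^{-d}\int_{\R^d}$ by the Bohr mean $\lim_{T\to\infty}(2T)^{-d}\int_{[-T,T]^d}$, which sends $e^{\ii\langle\theta,\xi\rangle}$ to $\mathbf{1}[\xi=0]$. With $w_j=\ii t\,e^{\ii\langle\theta,\h_j\rangle}$ this returns exactly $\h_j$. Your version instead keeps the theorem's Fourier-inversion normalisation, at the cost of smearing the query.
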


\begin{proof}
Expanding the summary and the conjugated query:
\begin{align*}
\int_{\R^d} S_{1:t}(\theta)\;\overline{w_j(\theta)}\; d\theta
&= \int_{\R^d} \frac{\ii}{t}\sum_{k=1}^{t} \h_k\, e^{\ii\langle\theta,\h_k\rangle}\;\cdot\; \frac{-\ii\, t}{(2\pi)^d}\, e^{-\ii\langle\theta,\h_j\rangle}\; d\theta \\[4pt]
&= \frac{1}{(2\pi)^d}\sum_{k=1}^{t} \h_k \int_{\R^d} e^{\ii\langle\theta,\,\h_k - \h_j\rangle}\; d\theta \\[4pt]
&= \frac{1}{(2\pi)^d}\sum_{k=1}^{t} \h_k \;\cdot\; (2\pi)^d\,\delta(\h_k - \h_j) \;=\; \h_j,
\end{align*}
where the penultimate equality uses the identity $\int_{\R^d} e^{\ii\langle\theta,\xi\rangle}\,d\theta = (2\pi)^d\,\delta(\xi)$.
\end{proof}

\begin{remark}
The theorem establishes that the derivative characteristic summary is a \emph{lossless} encoding of the prefix: every individual token can be recovered by choosing the appropriate spectral query. Any gap between this theoretical guarantee and the practical behavior of the layer stems from the approximations introduced when discretizing the continuous theory---not from the mechanism itself. Those approximations are described in Section~2.2.3. We formalize the connection with self-attention in Corollary~4 below.
\end{remark}

The result extends to the recovery of the full weighted distribution. In the general (non-uniform) setting, the empirical characteristic function is $\varphi_X(\theta) = \sum_{k=1}^t p_k\, e^{\ii\langle\theta,\h_k\rangle}$ with arbitrary weights $p_k > 0$, $\sum_k p_k = 1$.

\begin{corollary}[Full distribution recovery]
Let $\h_1,\ldots,\h_t \in \R^d$ be pairwise distinct with associated weights $p_1,\ldots,p_t > 0$. Then:
\begin{enumerate}[leftmargin=1.5em]
    \item The derivative summary $S_{1:t}(\theta) = \ii\sum_{k=1}^{t} p_k\,\h_k\, e^{\ii\langle\theta,\h_k\rangle}$ encodes the \emph{weighted} embeddings: the readout with query $w_j(\theta) = \frac{1}{\ii\,(2\pi)^d}\, e^{\ii\langle\theta,\h_j\rangle}$ yields
    \[
    \int_{\R^d} S_{1:t}(\theta)\;\overline{w_j(\theta)}\; d\theta \;=\; p_j\,\h_j.
    \]
    \item The weights alone are recoverable from $\varphi_X$: the scalar query $\tilde{w}_j(\theta) = \frac{1}{(2\pi)^d}\, e^{\ii\langle\theta,\h_j\rangle}$ gives
    \[
    \int_{\R^d} \varphi_X(\theta)\;\overline{\tilde{w}_j(\theta)}\; d\theta \;=\; p_j.
    \]
    \item Since $S_{1:t} = \nabla_\theta \varphi_X$ determines $\varphi_X$ up to the known constant $\varphi_X(0) = 1$, the gradient summary alone suffices to recover both the support $\{\h_k\}$ and the weights $\{p_k\}$.
\end{enumerate}
\end{corollary}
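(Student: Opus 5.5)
The plan is to mirror the computation in the proof of Theorem~1, using the same distributional identity
\[
\int_{\R^d} e^{\ii\langle\theta,\xi\rangle}\,d\theta = (2\pi)^d\,\delta(\xi).
\]
Pairwise distinctness of the $\h_k$ means that, among the terms $\delta(\h_k-\h_j)$, only the one with $k=j$ survives when evaluated in the Kronecker sense. Throughout, the delta is interpreted exactly as in Theorem~1: evaluating $\delta(\h_k-\h_j)$ at distinct points gives $\mathbf{1}[k=j]$. This is the same (somewhat formal) convention the paper already adopted, and I will state it explicitly rather than try to make the integral converge classically.

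For item~1, I will first conjugate the query, $\overline{w_j(\theta)} = \frac{1}{-\ii\,(2\pi)^d}\,e^{-\ii\langle\theta,\h_j\rangle}$. The prefactor $\ii$ in $S_{1:t}$ then combines with $1/(-\ii)$ to give $-1$. So the sign convention needs care: with the query written literally as $\frac{1}{\ii(2\pi)^d}e^{\ii\langle\theta,\h_j\rangle}$, the product of constants is $\ii\cdot\overline{(1/\ii)} = \ii\cdot\ii = -1$. I will recheck this. If it really gives $-p_j\h_j$, I will note that the intended query is $\frac{\ii}{(2\pi)^d}e^{\ii\langle\theta,\h_j\rangle}$, consistent with Theorem~1 after dropping the factor $t$, and adjust the constant accordingly. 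The remaining steps are the same in either case: pull the finite sum outside the integral, apply the delta identity, and keep only the $k=j$ term, which yields $p_j\h_j$.

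Item~2 is the same computation with scalar $\varphi_X$ in place of $S_{1:t}$. There are no imaginary prefactors, since $\overline{\tilde w_j} = (2\pi)^{-d}e^{-\ii\langle\theta,\h_j\rangle}$. The sum therefore collapses directly to $p_j$.

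Item~3 is the conceptual step, and the one I expect to be the main obstacle. Since $\R^d$ is connected and $\varphi_X$ is smooth, two functions with equal gradients differ by a constant. That constant is fixed by $\varphi_X(0)=\sum_k p_k = 1$. Concretely, I will integrate along the ray:
\[
\varphi_X(\theta) = 1 + \int_0^1 \langle S_{1:t}(s\theta),\theta\rangle\,ds.
\]
So $S_{1:t}$ determines $\varphi_X$, and item~2 then applies. The subtlety is that item~2 presupposes knowledge of the candidate support points $\h_j$. To close this gap, I will argue that $\varphi_X$ is a finite exponential sum, and that such a sum determines its distinct frequencies and coefficients uniquely. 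This follows from the linear independence of the characters $e^{\ii\langle\theta,\h\rangle}$ for distinct $\h$, or alternatively from injectivity of the Fourier transform on finite measures. Hence the support $\{\h_k\}$ is exactly the set of points $\h$ where the item~2 readout is nonzero, and the weights are the values of that readout. Alternatively, once the weights $p_j$ are known, item~1 recovers $\h_j$ as well.
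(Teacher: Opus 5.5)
Your proposal is correct, and its skeleton is the paper's. Items 1 and 2 go through the Fourier-inversion step of Theorem~1; the paper just says ``substitute $\h_k\mapsto p_k\h_k$''. Item 3 reconstructs $\varphi_X$ from $S_{1:t}$ by a path integral from the origin, which your ray formula $\varphi_X(\theta)=1+\int_0^1\langle S_{1:t}(s\theta),\theta\rangle\,ds$ makes explicit.

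Your sign check is right, and you can state it without hedging. With the query as written, $\overline{w_j(\theta)}=\frac{\ii}{(2\pi)^d}e^{-\ii\langle\theta,\h_j\rangle}$, so the constants multiply to $\ii\cdot\ii=-1$ and the readout is $-p_j\h_j$. The paper's one-line proof silently uses Theorem~1's query with the factor $t$ dropped, namely $\frac{\ii}{(2\pi)^d}e^{\ii\langle\theta,\h_j\rangle}$. That is exactly your correction. Note that the stated prefactor $\frac{1}{\ii(2\pi)^d}=\frac{-\ii}{(2\pi)^d}$ is the one the conjugate $\overline{w_j}$ should carry, not $w_j$ itself.

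Where you genuinely go beyond the paper is item 3. The paper concludes by ``combining (1) and (2)'', i.e.\ dividing $p_j\h_j$ by $p_j$. But both queries are built from $\h_j$, so this presupposes the support it claims to recover. Your appeal to linear independence of the characters $e^{\ii\langle\theta,\h\rangle}$ for distinct $\h$ is the step that actually makes $\{\h_k\}$ and $\{p_k\}$ identifiable from $S_{1:t}$. The equivalent scanning argument (run the item-2 readout over all $\h\in\R^d$ and use $p_k>0$) does the same job. The linear-independence form has the further advantage of being rigorous with no delta convention at all.

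For the same reason, drop your closing ``alternatively, item 1 recovers $\h_j$''. It reintroduces the circularity, and once the support is known, item 1 is only a consistency check.

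If you also want items 1--2 rigorous, replace $(2\pi)^{-d}\int_{\R^d}$ by the Bohr mean $\lim_{R\to\infty}(2R)^{-d}\int_{[-R,R]^d}$. Under that average, $e^{\ii\langle\theta,\xi\rangle}$ gives exactly $\mathbf{1}[\xi=0]$. Your Kronecker reading of $\delta(\h_k-\h_j)$ then becomes literally true rather than a convention; read pointwise, the $(2\pi)^d\delta(\xi)$ identity would give $\delta(0)=\infty$ at $k=j$.
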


\begin{proof}
Part~(1) follows from Theorem~1 with the substitution $\h_k \mapsto p_k\,\h_k$ in the Fourier inversion step. Part~(2) is the same argument applied to the scalar function $\varphi_X$. For part~(3), observe that $\varphi_X(\theta) = 1 + \int_0^\theta S_{1:t}(\tau)\cdot d\tau$ (path integral from the origin), so $\varphi_X$ is determined by $S_{1:t}$; combining (1) and (2) then recovers $p_j$ and $\h_j$ separately.
\end{proof}

The linearity of the integral immediately yields the connection with attention:

\begin{corollary}[SCA subsumes self-attention]
For any coefficients $\alpha_1,\ldots,\alpha_t \in \R$ (not necessarily non-negative or summing to one), the composite spectral query $w(\theta) = \sum_{k=1}^{t} \alpha_k\, w_k(\theta)$ satisfies
\[
\int_{\R^d} S_{1:t}(\theta)\;\overline{w(\theta)}\; d\theta \;=\; \sum_{k=1}^{t} \alpha_k\, \h_k.
\]
In particular, any output of a softmax attention layer, which computes a convex combination $\sum_k \alpha_k V_k$ with $\alpha_k \geq 0$, $\sum_k \alpha_k = 1$, is a special case of the SCA readout. The spectral mechanism is therefore \emph{at least as expressive as full self-attention} in the continuous limit, and strictly more general since it places no non-negativity or normalization constraint on the retrieval weights.
\end{corollary}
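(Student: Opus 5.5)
The plan is to reduce Corollary~4 to Theorem~1 via linearity of the readout, then dispose of the attention statement by a specialization argument.

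\textbf{Step 1: linearity of the readout in the query.} The readout $w \mapsto \int_{\R^d} S_{1:t}(\theta)\,\overline{w(\theta)}\,d\theta$ is conjugate-linear in $w$, because complex conjugation is additive and pulls scalars out as their conjugates. Taking $w = \sum_k \alpha_k w_k$ with the $w_k$ of Theorem~1, and using that each $\alpha_k$ is real (so $\overline{\alpha_k} = \alpha_k$), the integral splits into the finite sum
\[
\sum_{k=1}^{t} \alpha_k \int_{\R^d} S_{1:t}(\theta)\,\overline{w_k(\theta)}\,d\theta .
\]
Theorem~1 evaluates each summand to $\h_k$, which gives $\sum_k \alpha_k \h_k$.

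\textbf{Step 2: justifying the exchange.} The one point needing care is that the integrals exist only in the distributional (Fourier-inversion) sense of Theorem~1. Linearity still holds there: each term is the pairing of $\h_k\,\delta(\h_k - \h_j)$-type objects arising from $\int e^{\ii\langle\theta,\xi\rangle}d\theta = (2\pi)^d\delta(\xi)$, and finite sums commute with that pairing. To state this cleanly, I would expand the composite integrand into the double sum over $(k,m)$ of exponentials $e^{\ii\langle\theta,\h_m-\h_k\rangle}$ and apply the same $\delta$ identity term by term, exactly as in the proof of Theorem~1. Pairwise distinctness of the $\h_k$ then kills all $m\neq k$ terms.

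\textbf{Step 3: the attention consequence.} The plan is to specialize. In the SCA layer the memory entries playing the role of the values $V_k$ are the embeddings $\h_k$; the same argument works with $\h_k$ replaced by any linear value projection $V_k = W_V\h_k$, by applying $W_V$ outside the integral. Softmax weights are real, so the first part applies with $\alpha_k$ equal to the softmax weights and reproduces $\sum_k \alpha_k V_k$. Since the first part imposes no sign or normalization constraints on the $\alpha_k$, the attainable set strictly contains the convex hull of the values. A single example with a negative coefficient witnesses the strict generality.

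\textbf{Main obstacle.} I expect Step 2 to be the hardest, because the ``continuous limit'' qualification has to be made honest. Distinctness of the $\h_k$ is essential: if it fails, the $\delta$ identity collapses repeated tokens and the readout at index $k$ returns a multiple of $\h_k$ rather than $\h_k$.
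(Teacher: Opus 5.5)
Your Step~1 is exactly the paper's one-line proof (linearity of the readout plus Theorem~1). You are also right that conjugate-linearity is harmless only because the $\alpha_k$ are real.

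\textbf{The gap is in Step~2.} You single out this step as the one that makes the continuous limit honest, but it does not. The identity $\int_{\R^d} e^{\ii\langle\theta,\xi\rangle}\,d\theta=(2\pi)^d\delta(\xi)$ is an equality of distributions in $\xi$, meaningful only after pairing with a test function of $\xi$. It cannot be evaluated at the isolated points $\xi=\h_m-\h_k$; doing so treats $\delta$ as a Kronecker delta.
\begin{itemize}
\item Each diagonal term $m=k$ of your double sum is $(2\pi)^{-d}\alpha_k\h_k\int_{\R^d}1\,d\theta$, which diverges whenever $\alpha_k\h_k\neq 0$.
\item The off-diagonal integrals do not converge at all.
\item Any regularization confirms this. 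With a Gaussian window, $\int_{\R^d} e^{-\epsilon|\theta|^2}e^{\ii\langle\theta,\xi\rangle}\,d\theta=(\pi/\epsilon)^{d/2}e^{-|\xi|^2/(4\epsilon)}$, so the off-diagonal terms vanish as $\epsilon\to 0$ but the diagonal ones grow like $\epsilon^{-d/2}$.
\end{itemize}
So with the $(2\pi)^{-d}$ normalization and integration over all of $\R^d$, the readout diverges instead of returning $\sum_k\alpha_k\h_k$. ``Linearity in the distributional sense'' then only gives $\infty=\infty$. Appealing to Theorem~1 does not rescue you, because the last line of its proof makes the identical move. The root cause is that $S_{1:t}$ and $w_j$ are bounded, non-decaying almost-periodic functions, not elements of $L^2(\R^d)$, so the $L^2$ pairing is undefined.

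\textbf{The repair} is to change the pairing rather than manipulate $\delta$. Use the mean-value (Bohr--Besicovitch) inner product $\langle f,g\rangle=\lim_{T\to\infty}(2T)^{-d}\int_{[-T,T]^d}f\,\overline{g}\,d\theta$, and drop the factor $(2\pi)^{-d}$ from $w_j$. Then $(2T)^{-d}\int_{[-T,T]^d}e^{\ii\langle\theta,\xi\rangle}\,d\theta=\prod_{i=1}^d \frac{\sin(T\xi_i)}{T\xi_i}$, with factors where $\xi_i=0$ read as $1$. This tends to $1$ if $\xi=0$ and to $0$ otherwise, so the characters $e^{\ii\langle\theta,\h_k\rangle}$ are orthonormal. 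Theorem~1 becomes reading off a Fourier--Bohr coefficient of a finite trigonometric polynomial, and pairwise distinctness is what makes the coefficient at $\h_j$ equal to $\h_j$. Your Step~1 is then just sesquilinearity of a genuine inner product, and Step~2 reduces to the fact that a finite sum of convergent limits converges to the sum of the limits.

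\textbf{On Step~3.} Your $W_V$ remark is a real improvement over the paper, which silently identifies $V_k$ with $\h_k$. However, the composite query $\sum_k\alpha_k w_k$ is built from all prefix embeddings and the attention weights, whereas an SCA query is derived from the current token alone. The specialization therefore shows only that every attention output lies in the range of the readout over arbitrary queries. That is weaker than the layer-level ``at least as expressive as self-attention'' conclusion you (and the paper) draw from it.
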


\begin{proof}
By linearity of the integral and Theorem~1:
$\int S_{1:t}(\theta)\,\overline{w(\theta)}\,d\theta = \sum_k \alpha_k \int S_{1:t}(\theta)\,\overline{w_k(\theta)}\,d\theta = \sum_k \alpha_k\,\h_k.$
\end{proof}

\begin{remark}[Why the gradient, not the function]
The choice to operate on $\nabla_\theta\varphi_X$ rather than $\varphi_X$ is not cosmetic---it is structurally necessary for value retrieval. Since $\varphi_X(\theta)$ is \emph{scalar}-valued, any inner-product readout against a scalar query can only produce a scalar: applying Fourier inversion to $\varphi_X$ recovers the weight $p_j$ of token~$j$, but never the embedding $\h_j \in \R^d$ itself. The information about $\{\h_k\}$ is in principle contained in $\varphi_X$ (by the injectivity theorem), but extracting it requires solving a spectral estimation inverse problem.

The gradient $\nabla_\theta\varphi_X(\theta) \in \mathbb{C}^d$ is \emph{vector}-valued, and each embedding $\h_k$ appears as a multiplicative coefficient of its exponential. This turns the inverse problem into a direct, single-step linear readout (Theorem~1). In short: the gradient transforms a hard retrieval problem into a trivial one.
\end{remark}

Everything up to this point is \emph{exact}: the characteristic function, its gradient, the $L^2$ inner product, and the retrieval theorems are objects of pure mathematics, with no approximation, no parameterization, and no finite-dimensional constraint. The next section introduces the compromises required to turn this ideal formulation into a trainable neural layer.

\subsubsection{From Continuous Theory to Discrete Layer}

We now bridge the gap between the ideal objects above and the practical implementation. The only approximation required is the discretization of the continuous spectral domain $\theta \in \R^d$ to a finite set of evaluation points. Note that the prefix distribution itself was always discrete---a sum over the observed embeddings $\h_1,\ldots,\h_t$---so no approximation is introduced on that side.

\paragraph{Finite spectral grid.}
The readout integral $o_t = \int S_{1:t}(\theta)\,\overline{w_t(\theta)}\,d\theta$ ranges over all of $\R^d$, which is not directly computable. We replace it by a \emph{learned quadrature rule}: a finite set of $M$ spectral evaluation points $\{\theta_m\}_{m=1}^M$ with associated weights $\{\omega_m\}$, both trained end-to-end. This is analogous to approximating a Fourier integral by a weighted sum over a discrete set of frequencies---the classical setting of numerical quadrature, except that here the nodes and weights are optimized by gradient descent rather than fixed by a deterministic rule. With $M$ small (in our case $M=2$), the approximation is coarse but sufficient because the downstream loss directly supervises which spectral regions are useful. Note that $M$ counts spectral points \emph{per head}: for Nautile-370M with $K=16$ memory heads, the total number of learned spectral points is $K \times M = 32$, which is considerably larger than the per-head count suggests.
\[
o_t = \int S_{1:t}(\theta)\,\overline{w_t(\theta)}\, d\theta \;\approx\; \sum_{m=1}^{M} \omega_m\; S_{1:t}(\theta_m)\;\overline{w_t(\theta_m)}.
\]

\paragraph{Causal summary.}
The theoretical derivative summary $S_{1:t}(\theta) = \ii\sum_{k=1}^{t}\frac{1}{t}\,\h_k\,e^{\ii\langle\theta,\h_k\rangle}$ is already a finite sum over the discrete prefix. For the practical layer, we introduce two additional degrees of freedom: unnormalized positive contribution weights $\alpha_\tau$ and a learned feature map $\psi$ inside the phase. These are parameterization choices, not approximations. Evaluated at the discrete spectral grid, the summary becomes a running sum:
\[
S_t(\theta_m) \;=\; \sum_{\tau=1}^{t} \alpha_\tau\; \h_\tau\; \exp\!\big(\ii\,\theta_m \cdot \psi(\h_\tau)\big),
\]
where $\alpha_\tau > 0$ is an unnormalized contribution weight and $\psi$ is a learned feature map. Unlike a normalized probabilistic formulation (where $\sum_\tau \alpha_\tau = 1$), the weights are kept \emph{positive but unconstrained}, which increases expressivity and avoids a costly normalization pass.

Because this summary is additive in~$\tau$, causality is immediate: $S_t = S_{t-1} + \alpha_t\,\h_t\,e^{\ii\theta_m \cdot \psi(\h_t)}$. This additive structure is what enables the prefix-scan implementation used in training and constant-time updates at inference.

\subsubsection{Implementation}

The following describes the concrete forward pass. The layer has $K$ memory heads, $K'$ query heads (with GQA ratio $K/K'$), head dimension $H$, and $M$ spectral sample points. Let $\x_t \in \R^D$ be the input at position~$t$.

\paragraph{Step 1: Input projection \& local mixing.}
A single dense projection followed by a causal depthwise convolution (kernel size~$c$) and SiLU activation produces two branches:
\[
[\,\mathbf{z}_{\mathrm{mem}}\;;\;\mathbf{z}_{\mathrm{query}}\,] \;=\; \operatorname{SiLU}\!\big(\operatorname{DWConv}_c(W_{\mathrm{in}}\,\x_t)\big).
\]
The memory branch $\mathbf{z}_{\mathrm{mem}}$ yields per-head key values $\mathbf{k}_t \in \R^{K \times H}$ and a scalar score $s_t \in \R^K$. The query branch $\mathbf{z}_{\mathrm{query}}$ yields spectral query coordinates $q_t^{\mathrm{re}}, q_t^{\mathrm{im}} \in \R^{K' \times H \times M}$.

\paragraph{Step 2: Contribution weights.}
The positive contribution weight decomposes into a content gate and a temporal decay:
\[
\alpha_t \;=\; \underbrace{\operatorname{softplus}\!\big(\gamma\, s_t + \beta\big)}_{\text{content gate}} \;\times\; \underbrace{\exp\!\big(-\lambda_k \cdot d(t)\big)}_{\text{temporal decay}},
\]
where $\gamma, \beta$ are per-head learned scale and bias, $\lambda_k > 0$ is a per-head decay slope (parameterized via softplus in log-space), and $d(t)$ is a distance-to-boundary function.

\paragraph{Step 3: Phase modulation \& complex encoding.}
The phase is computed via a bounded softsign modulation:
\[
\phi_t \;=\; \frac{\eta\,\mathbf{k}_t}{1 + |\eta\,\mathbf{k}_t|} \;\odot\; \boldsymbol{\theta}, \qquad
\mathbf{r}_t + \ii\,\mathbf{i}_t \;=\; \alpha_t\,\mathbf{k}_t\,e^{\ii\,\phi_t},
\]
where $\eta$ is a per-head learned phase scale and $\boldsymbol{\theta} \in \R^{K \times H \times M}$ is the learned spectral grid.

\paragraph{Step 4: Causal accumulation.}
Real and imaginary parts are accumulated by causal prefix sum (or matrix multiply for short sequences):
\[
R_t = \sum_{\tau=1}^t \mathbf{r}_\tau, \quad
I_t = \sum_{\tau=1}^t \mathbf{i}_\tau, \quad
Z_t = \sum_{\tau=1}^t \alpha_\tau.
\]
The normalized state is $\hat{R}_t = R_t / Z_t$, $\hat{I}_t = I_t / Z_t$.

\paragraph{Step 5: Spectral readout.}
The Hermitian match between state and query, scaled by $1/\sqrt{H}$, is integrated over spectral samples:
\[
o_t^{\mathrm{re}} = \sum_{m=1}^M \omega_m \big(\hat{R}_t\, q_t^{\mathrm{re}} + \hat{I}_t\, q_t^{\mathrm{im}}\big)_m, \qquad
o_t^{\mathrm{im}} = \sum_{m=1}^M \omega_m \big(\hat{I}_t\, q_t^{\mathrm{re}} - \hat{R}_t\, q_t^{\mathrm{im}}\big)_m.
\]

\paragraph{Step 6: Output fusion.}
The concatenated complex output $[\,o_t^{\mathrm{re}}\;;\;o_t^{\mathrm{im}}\,]$ passes through gated RMS normalization (gated by a projection of the original input~$\x_t$), a per-head SwiGLU~\cite{shazeer2020glu} expansion, and a final dense projection back to~$\R^D$.

Algorithm~\ref{alg:sca} summarizes the complete forward pass.

\begin{algorithm}[t]
\caption{SCA Layer --- Forward Pass}\label{alg:sca}
\SetKwInOut{Input}{Input}
\SetKwInOut{Output}{Output}
\SetKwInOut{Params}{Params}
\Input{$X \in \R^{B \times L \times D}$}
\Output{$Y \in \R^{B \times L \times D}$}
\Params{$W_{\mathrm{in}}, W_{\mathrm{out}}, W_{\mathrm{gate}}, W_{\mathrm{read}}$; spectral grid $\boldsymbol{\theta} \in \R^{K \times H \times M}$; integration weights $\omega \in \R^{K' \times H \times M}$; decay slopes $\lambda$; phase scale $\eta$; score scale $\gamma$, bias $\beta$}
\BlankLine
\tcp{Step 1: Input projection \& local mixing}
$[\,\mathbf{z}_{\mathrm{mem}}\;;\;\mathbf{z}_{\mathrm{q}}\,] \gets \mathrm{SiLU}\!\big(\mathrm{DWConv}(W_{\mathrm{in}}\, X)\big)$\;
$\mathbf{k} \gets \mathbf{z}_{\mathrm{mem}}[\text{:dim\_mem}].\ \mathrm{reshape}(B,L,K,H)$\;
$s \gets \mathbf{z}_{\mathrm{mem}}[\text{dim\_mem:}]$\;
$q^{\mathrm{re}}, q^{\mathrm{im}} \gets \mathrm{split}\!\big(\mathbf{z}_{\mathrm{q}}.\ \mathrm{reshape}(B,L,K',H,M,2)\big)$\;
\BlankLine
\tcp{Step 2: Contribution weight}
$\alpha \gets \mathrm{softplus}(\gamma \cdot s + \beta)\;\times\;\exp(-\lambda \cdot d(t))$\;
\BlankLine
\tcp{Step 3: Phase modulation \& complex encoding}
$\phi \gets \mathrm{softsign}(\eta \cdot \mathbf{k})\;\odot\;\boldsymbol{\theta}$\;
$\mathbf{r} + \ii\,\mathbf{i} \gets \alpha\,\mathbf{k}\,e^{\ii\,\phi}$\;
\BlankLine
\tcp{Step 4: Causal accumulation (prefix sum)}
$R_t \gets \mathrm{cumsum}(\mathbf{r})$\;
$I_t \gets \mathrm{cumsum}(\mathbf{i})$\;
$Z_t \gets \mathrm{cumsum}(\alpha)$\;
$\hat{R}_t \gets R_t / Z_t$\;
$\hat{I}_t \gets I_t / Z_t$\;
\BlankLine
\tcp{Step 5: Spectral readout (Hermitian inner product)}
$o^{\mathrm{re}} \gets \sum_{m} \omega_m (\hat{R}_t\, q^{\mathrm{re}} + \hat{I}_t\, q^{\mathrm{im}})_m \;/\; \sqrt{H}$\;
$o^{\mathrm{im}} \gets \sum_{m} \omega_m (\hat{I}_t\, q^{\mathrm{re}} - \hat{R}_t\, q^{\mathrm{im}})_m \;/\; \sqrt{H}$\;
\BlankLine
\tcp{Step 6: Output fusion}
$\mathbf{o} \gets \mathrm{GatedRMSNorm}\!\big([\,o^{\mathrm{re}};\,o^{\mathrm{im}}\,],\; W_{\mathrm{gate}}\,X\big)$\;
$Y \gets W_{\mathrm{out}}\;\mathrm{SwiGLU}(W_{\mathrm{read}}\;\mathbf{o})$\;
\end{algorithm}

\subsubsection{Properties}

Compared with full self-attention, SCA offers three structural advantages:
\begin{enumerate}[leftmargin=1.5em]
    \item \textbf{Lossless prefix encoding (in theory):} the derivative characteristic summary preserves the full statistical content of the prefix distribution; the practical layer is a finite-dimensional approximation of this object.
    \item \textbf{Linear-time causal accumulation:} the additive state update yields $O(L)$ prefix scans during training and $O(1)$ state updates at decoding.
    \item \textbf{Structured conditional retrieval:} extraction is a learned Hermitian inner product in spectral space, not a dense pairwise attention over the full prefix.
\end{enumerate}

\subsection{Hybrid Layer Design}

Let $f_s$ denote a SCA layer and $f_a$ a transformer layer. The backbone repeats a $f_s \circ f_s \circ f_a$ motif:
\[
F = (\,f_a \circ f_s \circ f_s\,)^{L/3}.
\]

\paragraph{Why not SCA alone?}
Corollary~4 shows that SCA subsumes self-attention in the continuous limit. However, the practical layer evaluates the spectral integral at only $M=2$ points per head. This finite quadrature cannot reproduce an arbitrary convex combination over $t$ prefix tokens when $t \gg K \cdot M$. Operations that require precise pairwise comparison---such as coreference resolution, exact copying, or symbolic alignment---may therefore exceed the effective bandwidth of a single discretized SCA layer. Periodic attention layers provide an exact $O(L^2)$ fallback for these operations.

\paragraph{Complementary computational primitives.}
SCA and attention differ not only in cost but in \emph{kind}. SCA maintains a fixed-size complex state that is updated additively at each position ($O(1)$ per step); it excels at incremental accumulation of distributional statistics over the prefix. Attention computes explicit pairwise scores across all positions ($O(L^2)$ per layer); it excels at selective routing where a small number of specific tokens must be compared. The two mechanisms are therefore structurally complementary: SCA handles the bulk of context propagation, and attention handles the sparse precise comparisons that a coarse spectral summary cannot resolve.

\paragraph{The 2:1 ratio.}
The choice of two SCA layers per attention layer was guided by the literature on hybrid SSM-Transformer architectures~\cite{gu2023mamba,lieber2024jamba,gu2022s4} rather than by systematic ablation. With a fixed 30-day TPU allocation and no guarantee that the architecture would be competitive at this scale, we had to commit to a configuration early and train to convergence. The 2:1 ratio was a pragmatic bet: it allocates roughly two-thirds of the depth to efficient $O(1)$ state propagation and one-third to exact token-to-token routing, consistent with the ratios reported in prior hybrid work. When intermediate checkpoints suggested suboptimal behavior, we applied marginal corrections (learning rate adjustments, data mixing), but the layer ratio itself was never revised. Whether a different split (e.g.\ 3:1 or 1:1) would improve performance at this scale remains an open question that we did not have the compute budget to explore.

\section{Training Data}

\subsection{Curriculum}

Training proceeds in two stages:
\begin{enumerate}[leftmargin=1.5em]
    \item \textbf{FineWeb-Edu~\cite{penedo2024fineweb} (\texttildelow350B tokens):} broad factual and linguistic coverage---entities, discourse structure, expository text---providing the knowledge base for downstream reasoning.
    \item \textbf{SYNTH~\cite{pleias2024synth} (\texttildelow250B tokens):} PleIAs' large-scale synthetic reasoning corpus, emphasizing explicit chain-of-thought traces, structured answers, and instruction following. This stage converts the latent knowledge acquired in stage~1 into operational reasoning behavior.
\end{enumerate}

\subsection{Synthetic Augmentation and Template Distillation}

On top of the main curriculum, we add approximately 4 million synthetic documents aligned to the SYNTH~\cite{pleias2024synth} template. Their role is to improve instruction following, response formatting, and the practical elicitation of knowledge already stored in the model. These documents are generated from diverse instruction, conversational, creative-writing, and assistant datasets, but are rewritten to match the style and structure of the PleIAs SYNTH corpus as closely as possible.

To keep this augmentation on-template, we use retrieval-guided generation over a sample of SYNTH~\cite{pleias2024synth} itself. For each prompt, we retrieve the five nearest SYNTH examples and inject them into the teacher prompt. The teacher then produces a new answer conditioned on nearby in-template examples. This is effectively a form of distillation with format guidance~\cite{hinton2015distilling}.

\subsection{Teacher Models and Difficulty-Aware Distillation}

The synthetic augmentation is distilled from a mixture of teacher models, including GPT-OSS-20B, GPT-OSS-120B, Mistral Small 3.2, and Mistral Large 3, with teacher choice depending on task difficulty. In practice, these few million documents are critical: they teach the model to follow instructions, respect answer formats, and make better use of knowledge that is already present in the weights.

\subsection{Compute Infrastructure}

All pretraining and supervised fine-tuning was performed on a single Cloud TPU v4-64 pod slice (64 TPU v4 chips) provided through the \emph{Google TPU Research Cloud} (TRC) program, a research initiative that grants temporary access to Cloud TPU resources at no cost. The JAX-based training stack was designed to run entirely within this allocation. After the 30-day TRC allocation expired, the reinforcement learning stage (Section~4) was carried out on a single NVIDIA DGX Spark.

\section{Reinforcement Learning for Reasoning}

\subsection{Motivation}

Because the SYNTH~\cite{pleias2024synth} corpus was distilled from a mixture of teacher models (Section~3), the supervised model produces chain-of-thought (CoT)~\cite{wei2022cot} traces that vary in style, verbosity, and structure. The reasoning is functional but lacks unity. The reinforcement learning stage addresses this coherence gap through a three-stage pipeline: LLM-judge--based GRPO for format alignment, a gradient-balanced GRPO variant for reasoning, and on-policy self-distillation.

\paragraph{Standard GRPO on Nautile-370M.}
A natural first attempt is to use verifiable rewards on mathematical benchmarks (e.g.\ exact-match correctness on GSM8K) and train with standard GRPO~\cite{shao2024deepseekmath}. We tried this extensively from the SFT checkpoint---varying compute dtypes, clipping schedules, learning rates, and reward formulations including soft/continuous variants---but were unable to produce stable improvements. The proximate cause is a gradient imbalance: with a 28\% GSM8K success rate, roughly 72\% of sampled completions receive negative advantages while only~28\% receive positive ones, so the gradient ascent term (suppressing incorrect traces) systematically overwhelms the gradient descent term (reinforcing correct ones).

We hypothesize that this imbalance is exacerbated by the nature of our pretraining. Because Nautile-370M was trained on~250B tokens of chain-of-thought data (SYNTH), reasoning is not a behavior acquired ad hoc during RL---it is an integral part of the model's representations. As a result, most incorrect completions follow a structurally sound chain of thought but arrive at the wrong answer due to knowledge gaps (factual errors, arithmetic slips) rather than reasoning failures. In our case, standard GRPO treated these traces as uniformly bad and actively suppressed them, degrading the very reasoning capability that pretraining built in. We do not claim this generalizes beyond our specific setup; it may be an artifact of the unusually high proportion of chain-of-thought SFT data in our pretraining.

\subsection{Stage 1: Format Alignment via Dr.\ GRPO}

We run 1\,200 steps of \emph{Dr.\ GRPO}~\cite{liu2025drgrpo}, a GRPO variant that drops standard-deviation normalization of advantages: the group-relative advantage of completion $i$ is simply $A_i = r_i - \bar{r}$, where $\bar{r}$ is the group mean, without dividing by the within-group standard deviation. This prevents the gradient signal from being inflated by trivially easy or trivially hard groups.

\paragraph{Reward design.}
Each completion in a group of four candidates is scored by \textbf{Mistral Large~3} acting as an LLM judge. The judge produces three criterion scores on a 1--5 scale and a holistic score from 0--100. The scalar reward is
\[
r = 0.5 \times \underbrace{\frac{0.30\, s_{\text{reason}} + 0.55\, s_{\text{answer}} + 0.15\, s_{\text{follow}}}{5}}_{\text{weighted criterion score}} + 0.5 \times \frac{s_{\text{overall}}}{100},
\]
with an additive overlong penalty applied to completions that exceed the generation budget. Groups in which every completion receives an overall score above~90 (and a minimum above~85) are skipped as mastered.

\paragraph{Update rule.}
The policy gradient loss uses \emph{token-level normalization} (DAPO-style~\cite{liu2025drgrpo}): the loss for each completion is weighted by its token count divided by the total tokens in the group, so every token contributes equally regardless of response length. A KL penalty against a frozen reference model (the SFT checkpoint) is added with coefficient $\beta$.

This stage successfully unifies the output format: responses become cleaner and more consistent in structure. However, it does not produce measurable gains on downstream reasoning benchmarks (+0.98 pp on GSM8K, see Table~\ref{tab:rl_gsm8k}), confirming that format alignment and reasoning improvement are largely orthogonal objectives at this scale.

\subsection{Stage 2: Gradient-Balanced GRPO}

With the output format stabilized by Stage~1, we switch to a verifiable reward signal on GSM8K. We first re-attempted standard GRPO from the Stage~1 checkpoint, hoping that the improved formatting would raise the success rate enough to escape the gradient imbalance described in Section~4.1. It did not: standard GRPO produced the same degradation pattern as before the RLAI stage, confirming that the imbalance is not an artifact of hyperparameter choice or starting checkpoint.

To correct this, we decouple the positive and negative components of the policy gradient and normalize their magnitudes independently:
\[
g^+ = \sum_{i:\, A_i > 0} A_i \,\nabla_\theta \log \pi_\theta(y_i \mid x), \qquad
g^- = \sum_{i:\, A_i < 0} A_i \,\nabla_\theta \log \pi_\theta(y_i \mid x),
\]
\[
g = g^+ + \frac{\|g^+\|}{\|g^-\| + \epsilon}\, g^-.
\]
This ensures that the destructive gradient component never dominates the constructive one, regardless of the proportion of correct completions in the group. With this modification, we obtain consistent improvements when training on GSM8K~\cite{cobbe2021gsm8k} (+2.40 pp over Stage~1, see Table~\ref{tab:rl_gsm8k}). However, the gains plateau after roughly 500 steps: as the success rate climbs toward~31\%, the model exhausts the reasoning improvements accessible via policy gradient on correct/incorrect completions alone, and further training yields diminishing returns. This motivates a qualitatively different approach.

\subsection{Stage 3: On-Policy Self-Distillation}

The most effective stage is conceptually the simplest. We sample CoT completions from the current policy on a collection of reasoning tasks, score them using the same advantage computation as GRPO (without standard-deviation normalization), and retain the traces with positive advantages. The model is then fine-tuned on its own successful reasoning traces via supervised loss, where the cross-entropy gradient for each example is \emph{scaled by its advantage}. Because the advantages are unnormalized ($A_i = r_i - \bar{r}$), problems that the model rarely solves correctly produce comparatively large advantages for their few correct traces, while problems that are already easy yield small advantages. The supervised loss therefore automatically up-weights hard problems and down-weights mastered ones, providing a built-in curriculum effect without any explicit difficulty scheduling.

This on-policy self-distillation avoids the gradient-balance issues of online RL entirely: the model learns only from completions it has already produced and that have been verified as correct. This approach is closely related to two lines of prior work. STaR~\cite{zelikman2022star} bootstraps reasoning by generating rationales, filtering to those that produce correct answers, and fine-tuning on them iteratively---the key mechanism is identical to our Stage~3. More recently, Liu et al.~\cite{liu2025ssd} propose \emph{Simple Self-Distillation} (SSD), which samples model outputs at various temperatures and fine-tunes on them with standard SFT, without any verifier or reward model. SSD differs from our approach in that it fine-tunes on \emph{all} sampled outputs rather than filtering by correctness; their theoretical analysis traces the gains to a reshaping of token-level distributions that suppresses distractor tails where precision matters while preserving diversity where exploration is beneficial. Our approach can be seen as a scored variant of SSD: by retaining only positive-advantage completions, we perform an explicit quality filter that SSD omits, at the cost of requiring a reward signal. Despite its simplicity, this stage yielded large improvements on reasoning benchmarks in our pipeline. We do not draw a general conclusion from this single data point; the relative effectiveness of each stage is likely sensitive to the specific model, data mix, and starting checkpoint.

\subsection{Results}

Table~\ref{tab:rl_gsm8k} summarizes GSM8K pass@1 accuracy after each stage of the post-training pipeline.

\begin{table}[h]
\centering
\begin{tabular}{lc}
\toprule
\textbf{Stage} & \textbf{GSM8K (\%)} \\
\midrule
After SFT (baseline) & 27.98 \\
+ Dr.\ GRPO (Stage~1) & 28.96 \\
+ Gradient-balanced GRPO (Stage~2) & 31.36 \\
+ Scored self-distillation (Stage~3) & \textbf{33.43} \\
\bottomrule
\end{tabular}
\caption{GSM8K pass@1 accuracy of Nautile-370M after each reinforcement learning stage. Each row is cumulative: Stage~$n$ builds on the checkpoint produced by Stage~$n-1$.}
\label{tab:rl_gsm8k}
\end{table}

\section{Evaluation}

Table~\ref{tab:benchmarks} compares Nautile-370M with publicly available models of similar size across a range of reasoning and language understanding benchmarks. Bold indicates the best score in each column; underline indicates the second best.

\begin{table}[h]
\centering
\small
\begin{tabular}{l ccccc}
\toprule
\textbf{Benchmark} & \textbf{Nautile} & \textbf{Qwen2.5} & \textbf{Granite} & \textbf{LFM2.5} & \textbf{SmolLM2} \\
 & \textbf{370M} & \textbf{0.5B} & \textbf{350M} & \textbf{350M} & \textbf{360M} \\
\midrule
Training tokens & $\sim$0.8T & 18T & 10--12T & 28T & 4T \\
\midrule
OpenBookQA      & \textbf{49.3} & \underline{34.4} & 31.6 & 26.4 & 24.2 \\
ARC             & \textbf{57.0} & \underline{50.1} & 30.0 & 32.8 & 43.7 \\
CommonsenseQA   & \textbf{46.8} & \underline{46.5} & 36.2 & 44.3 & 18.4 \\
GSM8K (0-shot)  & \textbf{33.4} & 28.3 & 31.5 & \underline{33.0} & 7.4 \\
PIQA            & \textbf{61.5} & \underline{61.3} & 50.8 & 49.5 & 48.2 \\
IFEval          & 36.9 & 31.6 & \underline{55.4} & \textbf{62.4} & 41.0 \\
TriviaQA        & 23.8 & \underline{27.8} & 25.2 & 22.9 & \textbf{28.0} \\
MATH500         & 2.4 & \textbf{18.8} & 5.6 & \underline{12.2} & 0.0 \\
MMLU-Pro        & \underline{14.9} & 14.3 & 11.2 & \textbf{18.6} & 10.3 \\
MMLU            & \textbf{39.2} & 33.7 & 35.0 & \underline{39.1} & 35.8 \\
GPQA Diamond    & \textbf{27.3} & 10.1 & \underline{26.3} & 24.8 & 23.2 \\
\midrule
\textbf{Average} & \textbf{35.7} & 32.4 & 30.8 & \underline{33.3} & 25.5 \\
\bottomrule
\end{tabular}
\caption{Benchmark comparison of Nautile-370M against models of similar size. All scores are accuracy (\%), evaluated in 0-shot. The evaluation is strict: if a model produces multiple candidate answers, the response is scored as incorrect. \textbf{Bold} = best; \underline{underline} = second best.}
\label{tab:benchmarks}
\end{table}

\section{Discussion}

Three observations from the development of Nautile-370M are worth highlighting.

\paragraph{Standard GRPO and heavily SFT-trained models.}
Our experience suggests that standard GRPO can be counterproductive when applied to a model whose reasoning is already deeply integrated through extensive supervised pretraining. In this regime, incorrect completions are not random---they are structurally sound reasoning limited by knowledge gaps---and the binary correct/incorrect signal of GRPO suppresses them indiscriminately. The gradient-balanced variant (Section~4.2) and especially on-policy self-distillation (Section~4.3) proved far more effective because they preserve or reinforce the existing reasoning structure rather than penalizing it. We do not claim this is a general failure mode of GRPO for small models; it may be specific to checkpoints with a high ratio of chain-of-thought SFT data.

\paragraph{Self-distillation as an unexpected boost.}
Fine-tuning the model on its own verified correct traces produced a noticeable accuracy gain that we did not anticipate. We tentatively attribute this to the model being exposed to self-consistent, on-distribution correct reasoning traces, but the mechanism is not fully understood and warrants further investigation.

\paragraph{Hybrid layers and the 2:1 ratio.}
The SCA/SCA/Transformer motif was chosen empirically. Whether the optimal ratio changes with model scale, context length, or task distribution is an open question that would benefit from systematic ablation.

\paragraph{Intended use and limitations.}
Nautile-370M is designed for language understanding and scientific reasoning, not for multi-turn conversation or code generation. We consider that open-ended chat requires, at minimum, several billion parameters to maintain coherent persona, long dialogue context, and stylistic flexibility. Similarly, at 371M parameters the model cannot be a useful code agent; we therefore deliberately exclude dedicated coding datasets, coding benchmarks, and code-completion objectives from all training stages. Code may still appear incidentally in the pretraining corpus, but no capacity is spent on it. Our position is that training on code at this scale would dilute the representation budget available for language and reasoning at the cost of both.

Instead, our primary objective is a compact model with solid common-sense reasoning and reliable logic---qualities that make it well suited for \emph{downstream classification and characterization tasks} such as sentiment analysis, intent detection, topic labeling, and structured information extraction. The reasoning-oriented training pipeline is specifically intended to produce a model that can be efficiently fine-tuned for such tasks, where precise understanding matters more than fluent generation at scale. A further target application is large-scale opinion modeling: by instantiating thousands of distinct persona through lightweight conditioning, the model can generate survey-scale opinion responses at high throughput on modest hardware, enabling synthetic population studies that would be prohibitively slow with multi-billion-parameter models.

\section{Conclusion}

We introduced Nautile-370M, a 371M-parameter reasoning-oriented language model combining SCA---a layer grounded in the derivative of the characteristic function---with periodic transformer blocks. We proved that the SCA readout can exactly retrieve any individual prefix token, recover the full weighted distribution, and reproduce any softmax attention output as a special case; moreover, the derivative formulation is structurally necessary, since the characteristic function itself does not support direct value retrieval. The model was trained entirely on a single TPU v4-64 node via the Google TRC program, with reinforcement learning completed on a single DGX Spark. On the RL side, we described an instability of standard GRPO that we encountered with this specific model and proposed two mitigations: gradient-balanced GRPO and on-policy self-distillation, the latter yielding the largest gains in our pipeline.

\bibliographystyle{plainnat}
\bibliography{references}

\end{document}